\documentclass[a4paper]{styles/svproc}
\usepackage{url}
\usepackage{graphicx}
\usepackage{subcaption}
\usepackage{amsmath}
\usepackage{amssymb}
\usepackage{tikz}
\usetikzlibrary{
    arrows.meta,
    positioning,
    calc
}
\usepackage{tikz-cd}

\begin{document}
\mainmatter              
\title{Complete Motion Planning using Workspace-Fibered Decomposition for $n$R-Planar Manipulator}
\titlerunning{Workspace Fibration}  
%
\author{Aayush Rath \and Antony Thomas}

\authorrunning{Aayush Rath and Antony Thomas}
%

\institute{Robotics Research Center, IIIT Hyderabad, Hyderabad 500032, India.\\
\email{anusandhaan1@gmail.com,antony.thomas@iiit.ac.in}}

\maketitle              

\begin{abstract}
We propose a workspace-fibered decomposition framework for motion planning in $n$R planar redundant manipulators operating in cluttered environments. Rather than planning directly in the full $n$-dimensional configuration space, the method incrementally constructs obstacle-constrained reachable workspaces of lower-dimensional non- redundant subchains and recursively lifts them through redundant orientation fibers. This yields a sequence of reduced planning manifolds that preserve branch-consistent reachability structure while avoiding explicit construction of the full configuration-space obstacle geometry.

We first establish that, for planar position-only manipulators, the obstacle -constrained reachable workspace induced by the minimal non-redundant subchain provides an exact characterization of feasibility with respect to the connected component of the start configuration, enabling early infeasibility detection prior to introducing redundant degrees of freedom (DOF). We then introduce an incremental fiber-lifting procedure that propagates reachable workspace structure through successive redundant links while enforcing local inverse-kinematic branch consistency using Jacobian determinant continuity constraints.

The resulting representation admits efficient reduced-space planning directly on recursively-constructed workspace-fiber manifolds. Experimental results on redundant $n$R planar manipulators demonstrate that the proposed construction preserves collision-free connectivity structure across successive lifting stages while substantially reducing collision checking complexity relative to direct configuration space reasoning.

\keywords{Complete Motion Planning, Robotic Manipulation}
\end{abstract}

\section{Introduction}
\label{sec:introduction}
Motion planning has been an active area of research for nearly 50 years~\cite{orthey2023ARCRAS}. Nevertheless, complete motion planning, that is, finding a collision-free path when one exists and certifying infeasibility otherwise, remains a challenging problem~\cite{li2023IJRR}. Sampling-based motion planners (SBMPs)~\cite{karaman2011IJRR} can efficiently discover feasible paths in high-dimensional configuration spaces. However, when no collision-free path exists, SBMPs are unable to certify infeasibility and typically continue searching until a termination condition or timeout is reached. Existing complete planning methods are generally limited to low-dimensional configuration spaces~\cite{zhang2008IJRR}, since their computational complexity scales poorly with the dimensionality of the planning problem.

Recent approaches have enabled infeasibility certification, but their scalability remains limited, with demonstrations restricted to configuration spaces of up to 5-DOF~\cite{li2023RAL,thomas2025arxiv}. To address the scalability limitations of complete planning, several recent works have explored hierarchical abstractions and lower-dimensional relaxations of the configuration space. Quotient-space planning methods~\cite{orthey2018IROS,orthey2022ISRR} construct nested sequences of admissible lower-dimensional configuration spaces by progressively collapsing subsets of robot degrees of freedom. These reduced representations allow planning to proceed hierarchically, where feasibility discovered in lower-dimensional quotient spaces guides exploration in the full configuration space. Building upon this principle, multilevel sparse roadmap methods~\cite{orthey2021ICRA} recursively construct sparse graph structures across quotient-space hierarchies and demonstrate substantial improvements in planning efficiency for high-dimensional robotic systems.


In contrast, the approach proposed in this work constructs reduced representations through obstacle-constrained reachable workspace manifolds induced by lower-dimensional non-redundant subchains. Rather than recursively decomposing the configuration space directly, we incrementally propagate reachability structure through redundant orientation fibers. This yields a workspace-centered hierarchical representation that preserves branch-consistent reachability information while avoiding explicit construction of the full high-dimensional configuration-space obstacle geometry. Furthermore, by characterizing feasibility through the reachable workspace of non-redundant subchains, the proposed framework naturally admits early infeasibility detection prior to introducing additional redundant degrees of freedom.

\section{Related Work}
\label{sec:related_work}
The field of redundant manipulator motion planning has evolved from local velocity-based approaches toward global topological methods that characterize the entire solution space of the Inverse Kinematics Problem (IKP). For redundant manipulators, the IKP generally admits infinitely many solutions. For a regular\footnote{A regular point is a configuration of a non-redundant manipulator at which the Jacobian is full rank. A critical point is a configuration at which the Jacobian loses rank. The corresponding end-effector positions are called regular values and critical values, respectively.} end-effector location in the workspace, the set of solutions forms an r-dimensional submanifold of the configuration space, where r is the degree of redundancy. Depending on the end-effector position in the workspace, this solution set may be partitioned into multiple disjoint manifolds. Burdick defined each of the disjoint r-dimensional manifold of IKP as a Self Motion Manifold (SMM)~\cite{burdick1989ICRA}. He further showed that, for an $n$R-planar manipulator, the maximum number of SMMs is two.

Early foundational work \cite{Wenger1993} addressed the global feasibility of trajectories in cluttered environments by introducing (r+1)-dimensional feasibility maps. Their approach involves adding r extra parameters to the operational space, effectively treating the robot as non-redundant in an extended space to analyze joint limits and obstacle obstructions. While comprehensive, this method is generally suited for systems with low degrees of redundancy.
Furthering the topological perspective, \cite{Luck1997} developed a space discretization method based on self-motion topology. By partitioning the configuration space into c-bundles—regions where all SMMs are homotopic—\cite{Luck1997} established a bundle connectivity graph. This framework allows for hierarchical path planning that successfully avoids local minima and deadlocks by reasoning about the entire kinematic map.
In the context of workspace analysis, \cite{Peidro2018} proposed a sampling-based method to identify the collision-free workspace of redundant robots by monitoring the vanishing of SMMs. They demonstrated that internal motion barriers occur when connected components of these manifolds disappear due to kinematic constraints or link collisions. Their approach utilizes inverse kinematics to densely sample SMMs, followed by clustering and matching phases to detect these topological changes.
Most recently, \cite{FabregatJaen2025} presented a unified framework for global redundancy resolution that synthesizes topological analysis with numerical optimization. Their method improves upon previous sampling techniques through vectorized SMM generation and the construction of a Self-Motion Domain (SMD) graph. This framework provides a highly flexible pipeline: it first identifies feasible \textit{c-bundle chain} along a task trajectory and then employs constrained Quadratic Programming (QP) to optimize the resulting joint trajectories while strictly maintaining them on the SMMs. Unlike traditional local methods, this approach ensures global optimality and cyclic consistency even in high-dimensional joint spaces

In parallel to topological approaches for redundancy resolution, a substantial body of work has studied configuration-space decomposition methods for complete motion planning. Classical approaches such as vertical cell decomposition \cite{Alon2008ACM} and Cylindrical Algebraic Decomposition (CAD) \cite{Collins1975} partition the free configuration space into connected semi-algebraic cells that admit exact adjacency reasoning and completeness guarantees. These methods provide certificates of infeasibility whenever no collision-free path exists and form some of the earliest globally-complete motion planning frameworks.

To improve practical scalability, several adaptive decomposition strategies have been proposed. Star-shaped decomposition methods partition the collision-free configuration space into collections of star-shaped regions through adaptive subdivision and guard-point selection, enabling efficient local connectivity tests while preserving global completeness properties \cite{Varadhan2005RSS}. Similarly, the adaptive cell decomposition framework performs hierarchical subdivision of the configuration space and constructs exact connectivity graphs for collision-free planning. These approaches significantly reduce unnecessary subdivision in free regions while retaining rigorous completeness guarantees \cite{zhang2008IJRR}.

Despite their theoretical strengths, configuration-space decomposition methods generally scale poorly with dimensionality due to the exponential growth of the configuration space. Exact cell decomposition techniques rapidly become computationally intractable beyond low-dimensional systems, typically limiting their practical applicability to configuration spaces of dimension four or lower. For redundant manipulators, where the configuration space dimension grows directly with the number of joints, explicit decomposition of the full collision-free configuration space becomes prohibitively expensive.

More recently, quotient-space and multilevel planning methods have been introduced to improve scalability in high-dimensional motion planning by exploiting nested robot structure. Quotient-space decomposition methods construct a hierarchy of reduced configuration spaces by progressively collapsing subsets of degrees of freedom, enabling planning to proceed first in lower-dimensional quotient representations before lifting solutions into higher-dimensional spaces. Building upon this idea, the quotient-space planning \cite{orthey2018IROS} and later the Multilevel Sparse Roadmap \cite{orthey2021ICRA} framework, which recursively construct sparse roadmap representations across nested quotient spaces of increasing dimensionality. These methods significantly improve planning efficiency for high-dimensional robots by leveraging admissible lower-dimensional abstractions and hierarchical feasibility propagation. However, the quotient spaces are still defined directly in terms of configuration-space reductions and do not explicitly exploit the workspace topology induced by non-redundant subchains or the self-motion structure of redundant manipulators. In contrast, the proposed framework constructs reduced representations through obstacle-constrained reachable workspace manifolds and recursively propagates this structure through redundant orientation fibers, yielding a workspace-centered decomposition that preserves branch-consistent reachability information during lifting.

\section{Problem Statement}
\label{sec:problem_statement}
Consider a planar $n$R manipulator with configuration space
\[
\mathcal{C} = T^n = (SO(2))^n
\]
Let the forward kinematics of the subchain comprising the first $m$ links 
be denoted
\[
f_m : T^m \rightarrow \mathbb{R}^2,
\]
mapping joint configurations to the position of the $m$-th link 
end-effector in the plane.

Rather than planning directly in the high-dimensional space $T^n$, we 
seek an incremental decomposition that builds the reachable planning 
domain one fiber at a time, starting from the minimal non-redundant 
subchain and extending by one redundant degree of freedom at each step. 
The resulting decomposition is designed to:
\begin{enumerate}
    \item Preserve the topological structure of the obstacle-constrained 
    reachable workspace at each stage,
    \item Enable early detection of infeasibility before the full     configuration space is constructed,
    \item Reduce the cost of collision checking by inheriting free-space 
    validity from previous stages.
\end{enumerate}

We now define the problem studied in this work. \textit{Given a start configuration $q_s \in T^n$ and a target end-effector 
position $x_g \in \mathbb{R}^2$, we seek a continuous collision-free 
path
\[
\gamma : [0,1] \rightarrow \mathcal{C}_{\mathrm{free}} \subset T^n
\]
such that $\gamma(0) = q_s$ and $f_n(\gamma(1)) = x_g$. If no such path exists, the algorithm returns a certificate (proof) of infeasibility.}


\begin{figure}[t]
\centering

\resizebox{\columnwidth}{!}{
\begin{tikzpicture}[
    node distance=1.8cm,
    every node/.style={font=\scriptsize},
    manifold/.style={
        draw,
        thick,
        rounded corners=6pt,
        minimum width=2.4cm,
        minimum height=1.2cm,
        align=center,
        fill=blue!5
    },
    fiber/.style={
        draw,
        thick,
        circle,
        minimum size=0.8cm,
        fill=green!10
    },
    arrow/.style={
        -{Latex[length=2mm]},
        thick
    }
][h]


\node[manifold] (cspace)
{
Full C-space\\
$\mathcal{C}=T^2$
};

\node[manifold,
right=of cspace]
(cfree)
{
Connected Free\\
Component\\
$\mathcal{C}^{(s)}_{\mathrm{free}}\subset T^2$
};

\node[manifold,
right=of cfree]
(workspace)
{
Reachable Workspace\\
$\mathcal{R}_{2R}\subset\mathbb{R}^2$
};

\node[fiber,
below=1.5cm of workspace]
(fiber)
{
$S^1$
};

\node[manifold,
right=of workspace]
(reduced)
{
Reduced Space\\
$\mathcal{R}_{2R}\times S^1$
};


\draw[arrow]
(cspace)
-- node[above]
{\scriptsize Restriction}
(cfree);

\draw[arrow]
(cfree)
-- node[above]
{\scriptsize FK}
(workspace);

\draw[arrow]
(workspace)
-- node[above]
{\scriptsize Extrusion}
(reduced);

\draw[arrow]
(fiber)
-- node[right]
{\scriptsize Fiber Lift}
(reduced);


\node
at ($(workspace)!0.5!(reduced)+(0,1.1)$)
{
$\left|\det(J_a)-\det(J_b)\right|<\epsilon$
};

\draw[dashed, thick]
($(workspace)!0.5!(reduced)+(0,0.7)$)
--
($(workspace)!0.5!(reduced)+(0,-0.1)$);

\end{tikzpicture}
}

\caption{
Topology-aware reduced-space planning framework.
A connected collision-free component of the 2R subchain is projected into an obstacle-constrained reachable workspace manifold.
An $S^1$ orientation fiber corresponding to the redundant terminal link is then lifted over the workspace manifold, yielding the reduced planning space $\mathcal{R}_{2R}\times S^1$.
Local inverse-kinematic branch continuity is approximately preserved through Jacobian determinant consistency constraints (see Section~\ref{subsec:determinant}).
}

\label{fig:fiber_lifting_framework}

\end{figure}

\section{Workspace Fibration}
\label{sec:workspacefibration}
\subsection{Overview}
The central idea is to construct the reachable planning domain 
incrementally. At each stage $m$, we maintain an obstacle-constrained 
reachable workspace $\mathcal{R}_{mR} \subset \mathbb{R}^2$ for the 
$m$-link subchain, and extend it to stage $m+1$ by lifting over a new 
fiber $S^1$ corresponding to the $(m+1)$-th redundant link. The 
incremental sequence is:

\[
\mathcal{R}_{2R}
\;\xrightarrow{\;\text{lift}\;}\;
\mathcal{R}_{2R} \times S^1
\;\xrightarrow{\;\text{project}\;}\;
\mathcal{R}_{3R}
\;\xrightarrow{\;\text{lift}\;}\;
\mathcal{R}_{3R} \times S^1
\;\xrightarrow{\;\text{project}\;}\;
\mathcal{R}_{4R}
\;\rightarrow\; \cdots
\]

At each stage the planning domain is a product space 
$\mathcal{R}_{mR} \times S^1$, where $\mathcal{R}_{mR}$ is the 
obstacle-constrained reachable workspace of the $m$-link subchain and 
$S^1$ is the orientation fiber of the next redundant link. The full 
$n$R planning problem is solved on the final reduced space 
$\mathcal{R}_{(n-1)R} \times S^1$. Figure~\ref{fig:fiber_lifting_framework} provides a schematic illustration of the recursive reduced-space construction and the corresponding lifting procedure.

\subsection{Base Stage: Non-Redundant 2R Subchain}

The incremental construction is initialized at the minimal 
non-redundant subchain. For a planar arm with position-only task, this 
is the 2R subchain with forward kinematics
\[
h : T^2 \rightarrow \mathbb{R}^2
\]

Starting from a specified start configuration 
$q_s = (\theta_1^s, \theta_2^s) \in T^2$, we identify the connected 
free-space component
\[
\mathcal{C}^{(s)}_{\mathrm{free}} \subset T^2
\]
containing $q_s$ by building the collision occupancy of $T^2$ and 
applying connected component labeling with toroidal adjacency. The 
obstacle-constrained reachable workspace of the 2R subchain is then
\[
\mathcal{R}_{2R} = h\!\left(\mathcal{C}^{(s)}_{\mathrm{free}}\right) 
\subset \mathbb{R}^2
\]

Unlike the kinematic reachable workspace, $\mathcal{R}_{2R}$ inherits 
the topological constraints of the connected free-space component — in 
particular, obstacle-induced disconnections in $T^2$ propagate into 
$\mathcal{R}_{2R}$ as gaps or holes in the reachable workspace.

\textbf{Early infeasibility detection.} If the goal position 
$x_g \notin \mathcal{R}_{2R}$, the query is immediately infeasible 
from the given start configuration, regardless of how many additional 
redundant links are present. No further computation is required. This 
provides a computationally cheap necessary condition for feasibility 
that can be evaluated before any redundant link is considered.

\subsection{Jacobian-Induced Branch Structure}
\label{subsec:determinant}

\begin{figure*}[t!]
    \centering
    \begin{subfigure}{0.48\textwidth}
        \centering
        \includegraphics[width=0.9\linewidth]{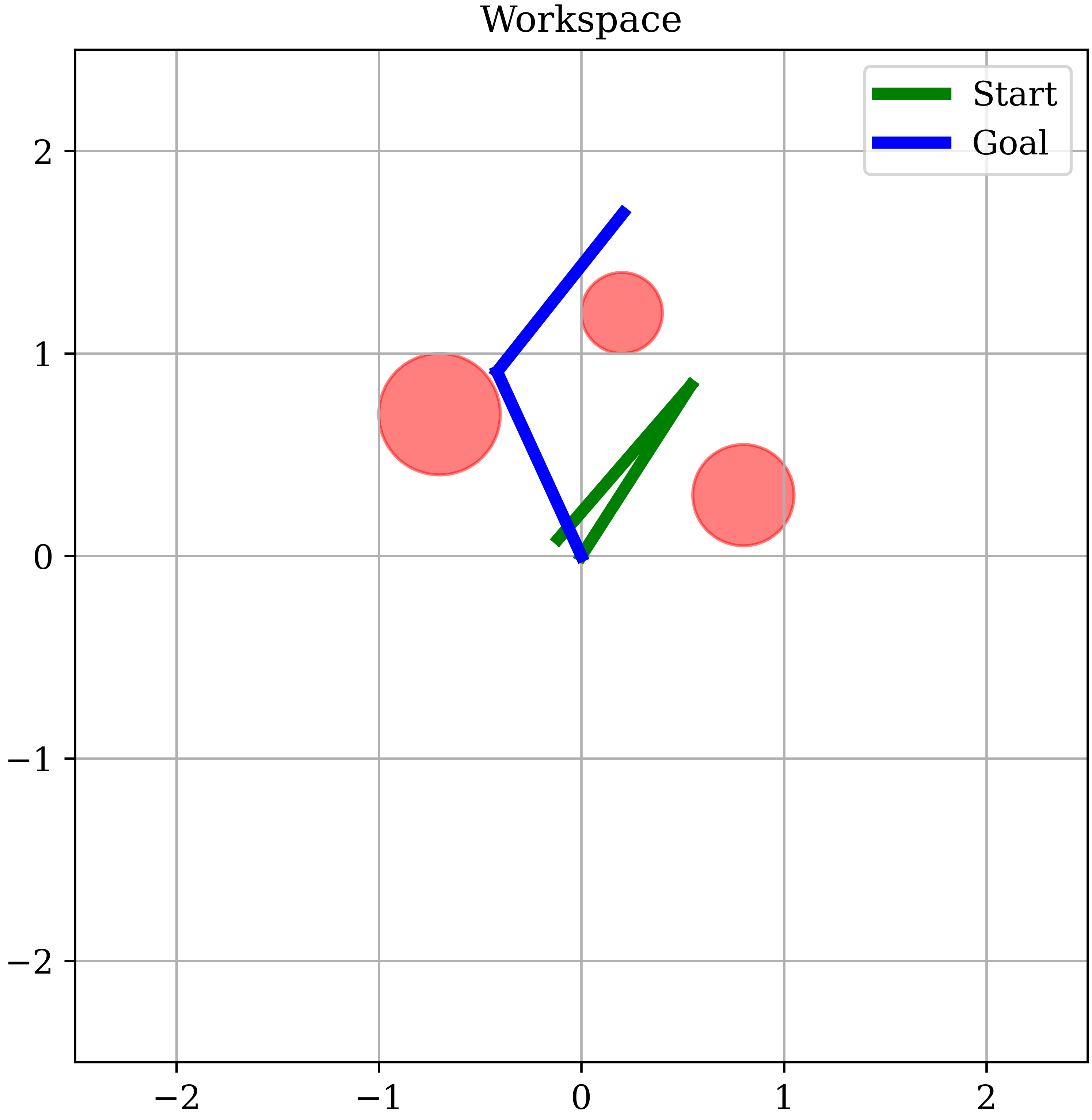}
        \caption{Environment setup with obstacles for 2R manipulator}
        \label{fig:workspace_detj}
    \end{subfigure}
    ~
    \begin{subfigure}{0.48\textwidth}
        \centering
        \includegraphics[width=1.1\linewidth]{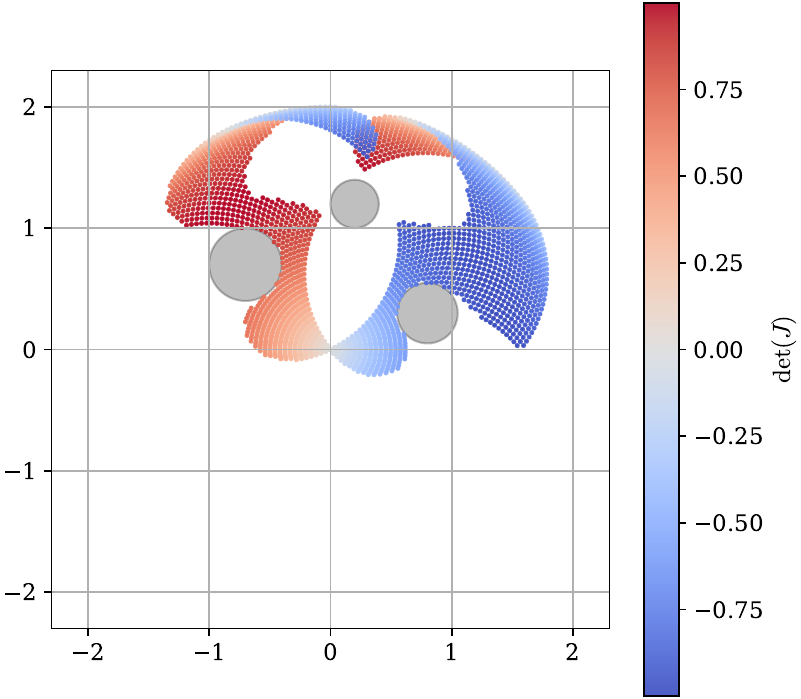}
        \caption{Reachable Workspace of the second link end-effector}
        \label{fig:workspace_detj}
    \end{subfigure}
    \caption{Reachable workspace for 3R planning}
    \label{rec}
\end{figure*}

The map $h$ is generically two-to-one over $\mathcal{R}_{2R}$ (except at singular configurations), since 
the 2R inverse kinematics admits at most two solutions corresponding to 
elbow-up and elbow-down configurations. Each workspace sample in 
$\mathcal{R}_{2R}$ retains its associated Jacobian determinant 
\[
\det(J) = l_1 l_2 \sin(\theta_2),
\]
which serves as a branch label for subsequent IK reconstruction. The 
sign of $\det(J)$ partitions $\mathcal{C}^{(s)}_{\mathrm{free}}$ into 
two open subsets, as can be seen from Figure~\ref{fig:workspace_detj}.
\[
\mathcal{C}^+ = \{q \mid \det(J(q)) > 0\}, \quad
\mathcal{C}^- = \{q \mid \det(J(q)) < 0\},
\]
separated by the singular set $\Sigma = \{q \mid \det(J(q)) = 0\}$.

\begin{proposition}
For a planar 2R subchain, the self-motion leaf 
$S_x = \{q \in T^2 \mid h(q) = x\}$ at a generic workspace point 
$x \in \mathcal{R}_{2R}$ has at most two connected components, 
corresponding to the elbow-up and elbow-down configurations.
\end{proposition}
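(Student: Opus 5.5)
The plan is to compute the fiber $S_x$ explicitly from the 2R inverse kinematics and count its points. Write $h(\theta_1,\theta_2) = \bigl(l_1\cos\theta_1 + l_2\cos(\theta_1+\theta_2),\; l_1\sin\theta_1 + l_2\sin(\theta_1+\theta_2)\bigr)$ and set $r = \|x\|$. Taking squared norms gives the law-of-cosines relation
\[
r^2 = l_1^2 + l_2^2 + 2 l_1 l_2 \cos\theta_2 ,
\]
so $\cos\theta_2 = c(x) := (r^2 - l_1^2 - l_2^2)/(2 l_1 l_2)$ depends only on $x$. Any $q\in S_x$ must therefore have $\theta_2 \in \{\pm \arccos c(x)\}$. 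When $|c(x)|<1$ these are two distinct values in $S^1$. When $|c(x)|=1$ they coincide, and when $|c(x)|>1$ there are none.

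Next I fix $\theta_2$ and solve for $\theta_1$. The position equation can be rewritten as $x = R(\theta_1)\,v(\theta_2)$, where $v(\theta_2) = (l_1 + l_2\cos\theta_2,\; l_2\sin\theta_2)$ and $R(\theta_1)\in SO(2)$. Since $\|v(\theta_2)\| = r$ by the relation above, the case $r>0$ gives a unique rotation taking $v(\theta_2)$ to $x$. This is because $SO(2)$ acts simply transitively on each circle of positive radius. Hence each admissible $\theta_2$ yields exactly one $\theta_1$, and $S_x$ consists of at most two isolated points.

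I take ``generic'' to mean a regular value in the sense of the earlier footnote, i.e.\ $|c(x)|<1$ and $r>0$ (which holds automatically except when $l_1=l_2$). For such $x$, $S_x$ is a finite set whose connected components are its points. There are at most two of them, with $\sin\theta_2$ of opposite signs. Since $\det J = l_1 l_2 \sin\theta_2$, one point lies in $\mathcal{C}^+$ and the other in $\mathcal{C}^-$, which is exactly the elbow-up/elbow-down labeling.

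Intersecting with the obstacle-free component $\mathcal{C}^{(s)}_{\mathrm{free}}$ can only remove points, so the bound still holds. I would remark that $S_x$ here denotes the full kinematic fiber in $T^2$, and its restriction to the free component has at most as many components. The step needing most care is the degenerate locus. At $|c(x)|=1$ (boundary circles $r = l_1+l_2$ and $r = |l_1-l_2|$) the two branches merge into one point on $\Sigma$. At $x=0$ with $l_1=l_2$ the fiber is a whole circle $\{\theta_2=\pi\}$, a single component. I would note that these nongeneric cases still have at most two components, though the elbow labeling fails there, and that is why they are excluded by the genericity hypothesis.
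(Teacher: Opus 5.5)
Your proof is correct and follows the same route as the paper. The law-of-cosines relation fixes $\cos\theta_2 = c(x)$, so $\theta_2 = \pm\arccos c(x)$, and each admissible $\theta_2$ determines $\theta_1$ uniquely, so $S_x$ has at most two isolated points. You also supply justification the paper omits: the $SO(2)$ argument for uniqueness of $\theta_1$ (which is where $r>0$ is needed) and the treatment of the degenerate fibers. One small correction: $r>0$ is already implied by $|c(x)|<1$ for all link lengths, including $l_1=l_2$, because $c(x)>-1$ is equivalent to $r>|l_1-l_2|$.
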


\begin{proof}
The IK equation gives 
$\cos\theta_2 = (|x|^2 - l_1^2 - l_2^2)/(2l_1 l_2)$, admitting at 
most two solutions for $\theta_2 \in (-\pi,\pi]$ since cosine is 
injective on $(0,\pi)$ and solutions are related by 
$\theta_2 \mapsto -\theta_2$. Each $\theta_2$ uniquely determines 
$\theta_1$, giving at most two isolated points in $T^2$.
\end{proof}

\subsection{Incremental Fiber Lifting}

Given the reachable workspace $\mathcal{R}_{mR}$ of the $m$-link 
subchain, we extend to the $(m+1)$-link subchain by lifting over the 
orientation fiber of the $(m+1)$-th link. The reduced planning space 
at stage $m$ is
\[
\mathcal{M}^{(m)}_{\mathrm{red}} = \mathcal{R}_{mR} \times S^1
\]
where $S^1$ is discretized uniformly into $N_\theta$ samples over 
$\theta_{m+1} \in [-\pi, \pi)$.

For each workspace sample $x \in \mathcal{R}_{mR}$ and each 
$\theta_{m+1} \in S^1$, the collision-free validity of the full 
$(m+1)$-link configuration is evaluated by:
\begin{enumerate}
    \item Recovering the $m$-link IK solution consistent with the 
    stored $\det(J)$ branch label. Collision checking is unnecessary, since collision-free validity is inherited from $\mathcal{C}^{(s)}_{\mathrm{free}}$.
    \item Checking only the $(m+1)$-th link for collision against 
    obstacles.
\end{enumerate}

This produces the reduced occupancy of 
$\mathcal{M}^{(m)}_{\mathrm{red}}$. The reachable workspace of the 
$(m+1)$-link subchain is then obtained by projecting the free subset 
of $\mathcal{M}^{(m)}_{\mathrm{red}}$ back through the forward 
kinematics of the $(m+1)$-link subchain:
\[
\mathcal{R}_{(m+1)R} = f_{m+1}\!\left(
    \mathrm{free}\!\left(\mathcal{M}^{(m)}_{\mathrm{red}}\right)
\right)
\]

This projection feeds into the next stage of the incremental 
construction, replacing $\mathcal{R}_{mR}$ with the updated reachable 
workspace $\mathcal{R}_{(m+1)R}$ for the subsequent fiber lift.

\textbf{Stage-wise infeasibility detection.} At each stage $m$, if 
$x_g \notin \mathcal{R}_{mR}$, the query is infeasible from the 
current start component and the construction terminates early. This 
allows infeasibility to be detected incrementally, often well before 
the full $n$-link planning space is constructed.

\subsection{Branch-Consistent Connectivity}

Neighbor connectivity in $\mathcal{M}^{(m)}_{\mathrm{red}}$ enforces 
a local Jacobian continuity constraint. Adjacent lattice nodes $a$ and 
$b$ are connected only if 
\[
|\det(J_a) - \det(J_b)| < \varepsilon
\]
for a prescribed threshold $\varepsilon > 0$. This condition suppresses 
discontinuous IK branch switches while permitting smooth passage 
through singular configurations when they are collision-free, 
consistent with the physical traversability of singularities at the 
position level.

The effect of the Jacobian-based continuity criterion is illustrated in Figure~\ref{fig:workspace_detj}. Although the reachable workspace appears connected above the central obstacle, the determinant map on the right shows that this region belongs to different $\det(J)$ branches. The continuity constraint therefore prevents edges from connecting nodes across the branch boundary, eliminating spurious inverse-kinematics transitions. Consequently, the third link cannot cross directly through this region, and a valid path for the 3R subchain must instead pass through the lower region.

\subsection{Planning on the Final Reduced Manifold}

For an $n$R planar arm, the incremental construction terminates at 
stage $m = n-1$, yielding the final reduced planning space
\[
\mathcal{M}_{\mathrm{red}} = \mathcal{R}_{(n-1)R} \times S^1
\]

Path planning is performed directly on the reduced occupancy lattice 
of $\mathcal{M}_{\mathrm{red}}$ using $A^*$ search, with neighbor 
connectivity enforcing spatial proximity on $\mathcal{R}_{(n-1)R}$, 
periodic continuity on $S^1$, and branch consistency via $\det(J)$. 
Once a reduced path is found, the full $n$R trajectory is 
reconstructed by lifting each reduced state through the IK branch 
consistent with its stored $\det(J)$ label.

\begin{proposition}[Path Validity]
Any path found in $\mathcal{M}_{\mathrm{red}}$ under the determinant 
continuity constraint lifts to a valid continuous collision-free path 
in $\mathcal{C}^{(s)}_{\mathrm{free}} \subset T^n$.
\end{proposition}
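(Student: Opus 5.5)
The argument has two parts: first show that each lifted lattice node is a collision-free configuration of the full arm (pointwise validity), then show that consecutive nodes can be joined by continuous collision-free segments (continuity). The lift is explicit. A node of $\mathcal{M}_{\mathrm{red}}$ is a pair $(x,\theta_n)$ with $x \in \mathcal{R}_{(n-1)R}$ and a stored branch label $\det(J)$. It maps to $q = (q_{1:n-1}(x,\det J), \theta_n)$, where $q_{1:n-1}$ is the $(n-1)$-link IK solution recovered at the stage where $x$ was produced.

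For pointwise validity I would use induction on the stage $m$. At $m=2$, every $x \in \mathcal{R}_{2R}$ equals $h(q)$ for some $q \in \mathcal{C}^{(s)}_{\mathrm{free}}$. Among the at most two preimages, the $\det(J)$ label picks out the one that was actually sampled; this is where the earlier Proposition on the two components of $S_x$ is used. That preimage is collision-free by construction.

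For the inductive step, suppose every point of $\mathcal{R}_{mR}$ carries a stored collision-free $m$-link configuration. A free node of $\mathcal{M}^{(m)}_{\mathrm{red}}$ then appends link $m+1$, which has been checked against the obstacles. So the $(m+1)$-link configuration is free, and projecting it by $f_{m+1}$ yields a point of $\mathcal{R}_{(m+1)R}$ whose stored configuration is free. This establishes the invariant through $m=n-1$. It follows that every node on a path found by $A^*$ lifts to a configuration in $\mathcal{C}_{\mathrm{free}} \subset T^n$.

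For continuity, consider adjacent nodes $a,b$ with $|\det J_a - \det J_b| < \varepsilon$. I would join $q_a$ and $q_b$ by the straight segment in $T^n$, taking shortest angular differences. If $\operatorname{sign}\det J_a = \operatorname{sign}\det J_b$, the IK map on that branch is a local diffeomorphism away from $\Sigma$, so spatially adjacent workspace samples yield nearby joint values. If the signs differ, the smallness of $\varepsilon$ forces both nodes to lie near $\Sigma$, where the two branches meet, so $q_a$ and $q_b$ are again close. In either case, by uniform continuity of the IK on a branch (Lipschitz away from $\Sigma$), $\|q_a - q_b\|$ is bounded by a constant depending on the lattice resolution and $\varepsilon$. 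Concatenating the segments gives a continuous path from the node of $q_s$ to a node with $f_n = x_g$. Because the path starts at $q_s$ and moves continuously through free configurations, it stays in the component containing $q_s$, i.e. in $\mathcal{C}^{(s)}_{\mathrm{free}}$ as lifted to $T^n$.

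The main obstacle is that the segment between two free lattice configurations need not itself be free. Near $\Sigma$, moreover, the IK is not Lipschitz, so closeness in the workspace does not immediately give closeness in joint space. I would handle this with a clearance assumption: the lattice resolution, $\varepsilon$, and $N_\theta$ are chosen so that every free node has clearance $\delta$ exceeding the maximal link-point displacement along an edge. That displacement is bounded via forward-kinematics Lipschitz constants (sum of link lengths times the joint change). Under this assumption, every intermediate configuration on a segment remains free. I expect the statement to hold only under such a resolution/clearance hypothesis, i.e. resolution-completeness style, and I would make that hypothesis explicit.
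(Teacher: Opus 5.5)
Your route differs from the paper's in the continuity half. The paper first proves nodewise collision-freeness: the 2R part is inherited from $\mathcal{C}^{(s)}_{\mathrm{free}}$, and each added link is certified by its stage's occupancy check. It then argues ``kinematic continuity'' only from the fact that an edge joining opposite determinant signs has jump $|\det J_a|+|\det J_b|$, so it is admitted only near $\Sigma$. Your induction on $m$ is that first half made precise. It also fixes a point the paper glosses over: for $m\ge 3$ the $m$-link configuration must be stored, because the 2R label cannot recover it. For the second half, you correctly observe that nodewise freeness plus ``no branch switch'' says nothing about the motion between consecutive lifted states. The paper's proof never addresses this, so a resolution/clearance hypothesis is a genuine repair rather than a weakening.

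The gap is your bound on $\|q_a-q_b\|$ by a quantity depending only on lattice resolution and $\varepsilon$. This bound is load-bearing: it is what lets the clearance hypothesis be met by refining the lattice, and without it that hypothesis can fail at every resolution. It breaks in two places, and the paper's proof handles neither.
\begin{itemize}
\item For $n\ge 4$ there is no single-valued ``IK map on a branch''. A node of $\mathcal{R}_{(n-1)R}$ lifts to a stored point of a positive-dimensional fiber of $f_{n-1}$, while the label is only $l_1l_2\sin\theta_2$. Since $\sin\theta_2=\sin(\pi-\theta_2)$, configurations with elbow angles $\theta_2$ and $\pi-\theta_2$ carry identical labels. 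Links $3,\dots,n-1$ can place both tips at adjacent lattice points, so such an edge passes the determinant test with a joint jump that does not shrink with resolution.
\item Even for $n=3$, uniform continuity of the branchwise IK needs $l_1\neq l_2$. With $l_1=l_2=l$, as in both experiments, $h(q)=2l\cos(\theta_2/2)\,(\cos(\theta_1+\theta_2/2),\sin(\theta_1+\theta_2/2))$, so the inner singular circle collapses to $x=0$. The elbow-up lifts of $x=(\pm r,0)$ are spatially adjacent for small $r$ and have identical $\det J$, yet their $\theta_1$ values differ by exactly $\pi$.
\end{itemize}
To close the argument, add proximity of the stored joint configurations to the adjacency rule, or impose your clearance condition directly on the actual stored $q_a,q_b$. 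Your straight-segment-plus-clearance argument then works for every $n$.
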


\begin{proof}
Let $\gamma = (w_0, w_1, \ldots, w_k)$ be a path on the reduced 
lattice satisfying $|\det(J_{w_i}) - \det(J_{w_{i+1}})| < \varepsilon$ 
at every edge.

\textbf{Collision-free validity.} Each workspace sample $w_i$ was 
drawn from $\mathcal{C}^{(s)}_{\mathrm{free}}$ via $h$, so the 
recovered subchain configuration is collision-free by construction. 
Collision-free validity of each redundant link at stage $m$ is 
guaranteed by the reduced occupancy check at that stage. Therefore 
the full $n$R configuration at each step is collision-free.

\textbf{Kinematic continuity.} A discontinuous IK branch switch at 
step $i$ would require $\det(J)$ to change sign between $w_i$ and 
$w_{i+1}$. Such a sign change requires passage through zero, producing 
a jump of magnitude at least $|\det(J_{w_i})| + |\det(J_{w_{i+1}})|$. 
Away from $\Sigma$, this exceeds $\varepsilon$, violating the 
continuity constraint. Therefore no discontinuous branch switch occurs 
away from $\Sigma$. Near $\Sigma$ where $\det(J) \approx 0$, both 
terms are small and the constraint is satisfied across a sign change, 
correctly permitting smooth passage through a collision-free singular 
configuration.

Combining both claims, the lifted path is collision-free and 
kinematically continuous in $\mathcal{C}^{(s)}_{\mathrm{free}}$.
\end{proof}

\section{Experiments}
\label{sec:experiments}
\subsection{Experimental Setup}

We evaluate the proposed workspace-fibered decomposition framework on two classes of redundant planar manipulators (Figure~\ref{fig:example}). The first experiment considers a planar 3R manipulator as shown in Figure~\ref{fig:3R_example} operating in a workspace populated by circular obstacles. This system represents the minimal redundant planar manipulator for a position-only task. The manipulator consists of a 2R non-redundant positioning subchain with link lengths $ l_1 = l_2 = 1.0$ and a redundant terminal link with $l_3 = 0.6.$

The reduced planning space is constructed as
\[
\mathcal{M}_{\mathrm{red}}
=
\mathcal{R}_{2R}\times S^1
\]
where $\mathcal{R}_{2R}$ denotes the obstacle-constrained reachable workspace of the 2R subchain and $S^1$ parameterizes the orientation of the redundant terminal link.

The second experiment as shown in Figure~\ref{fig:5R_example} evaluates the scalability of the framework on a planar 5R manipulator with polygonal obstacles. The manipulator link lengths are
\[
(l_1,l_2,l_3,l_4,l_5)
=
(1.0,\;1.0,\;0.8,\;0.6,\;0.4).
\]

In contrast to the 3R case, the reduced space is constructed incrementally through successive fiber lifts:
\[
\mathcal{R}_{2R}
\rightarrow
\mathcal{R}_{2R}\times S^1
\rightarrow
\mathcal{R}_{3R}
\rightarrow
\mathcal{R}_{3R}\times S^1
\rightarrow
\mathcal{R}_{4R}
\rightarrow
\mathcal{R}_{4R}\times S^1.
\]

This experiment evaluates whether obstacle-constrained reachability information can be propagated recursively through successive redundant links without explicitly constructing the full configuration space $T^5$.

\subsection{Implementation Details}

\paragraph{2R Reachability Construction.}

For both experiments, the pipeline begins by constructing the configuration space of the 2R non-redundant subchain. The toroidal configuration space
\[
T^2 = S^1 \times S^1
\]
is discretized uniformly over
\[
\theta_1,\theta_2 \in [-\pi,\pi).
\]

Collision occupancy is computed through exact geometric intersection tests between manipulator links and workspace obstacles. Connected components on the toroidal lattice are extracted using a Union-Find structure with wraparound adjacency.

Starting from an initial configuration
\[
q_s = (\theta_1^s,\theta_2^s)
\]
the connected free-space component
\[
\mathcal{C}_{\mathrm{free}}^{(s)}
\subset T^2
\]
is identified and projected through the forward kinematics map
\[
h:T^2\rightarrow\mathbb{R}^2
\]
to obtain the obstacle-constrained reachable workspace
\[
\mathcal{R}_{2R}
\]
Each workspace sample additionally stores the Jacobian determinant
\[
\det(J)
=
l_1 l_2 \sin(\theta_2),
\]
which serves as a continuous local label distinguishing the two disconnected inverse-kinematic branches of the 2R subchain.

\paragraph{Incremental Fiber Lifting.}

For the 5R experiment, the reachable workspace is expanded incrementally by successively introducing redundant links.

Given a reachable workspace
\[
\mathcal{R}_{mR},
\]
the orientation of the next redundant link is discretized over an $S^1$ fiber:
\[
\mathcal{R}_{mR}\times S^1
\]

For each lifted state, only the newly-added link is collision-checked against the environment. Collision checking of the preceding subchain is omitted since collision-free validity is already guaranteed by membership in the reachable set of the previous stage.

The free subset of the lifted manifold is then projected through forward kinematics to obtain
\[
\mathcal{R}_{(m+1)R}
\]

This process is repeated recursively until the final reduced planning manifold is obtained.

\paragraph{Reduced-Space Planning.}

Planning is performed directly on the reduced manifold lattice using $A^*$ search.

Unlike a regular Euclidean grid, adjacency is not determined by lattice indexing. Instead, two reduced states are considered neighbors only if:
\begin{enumerate}
    \item Their workspace positions are spatially nearby,
    \item Their redundant fiber coordinates are locally continuous,
    \item The associated Jacobian determinants satisfy
    \[
    |\det(J_a)-\det(J_b)|<\epsilon.
    \]
\end{enumerate}

Importantly, the determinant field is always computed using only the Jacobian of the 2R non-redundant subchain, even for the higher-dimensional 5R system. This preserves consistency with the underlying inverse-kinematic branch decomposition induced by the base positioning chain.

The determinant continuity constraint prevents discontinuous transitions across disconnected self-motion branches and acts as a local topological consistency condition during graph construction.

\subsection{Results}

\paragraph{3R Workspace Structure.}

\begin{figure}[t!]
    \centering
    \begin{subfigure}{0.48\textwidth}
        \centering
        \includegraphics[width=1\linewidth]{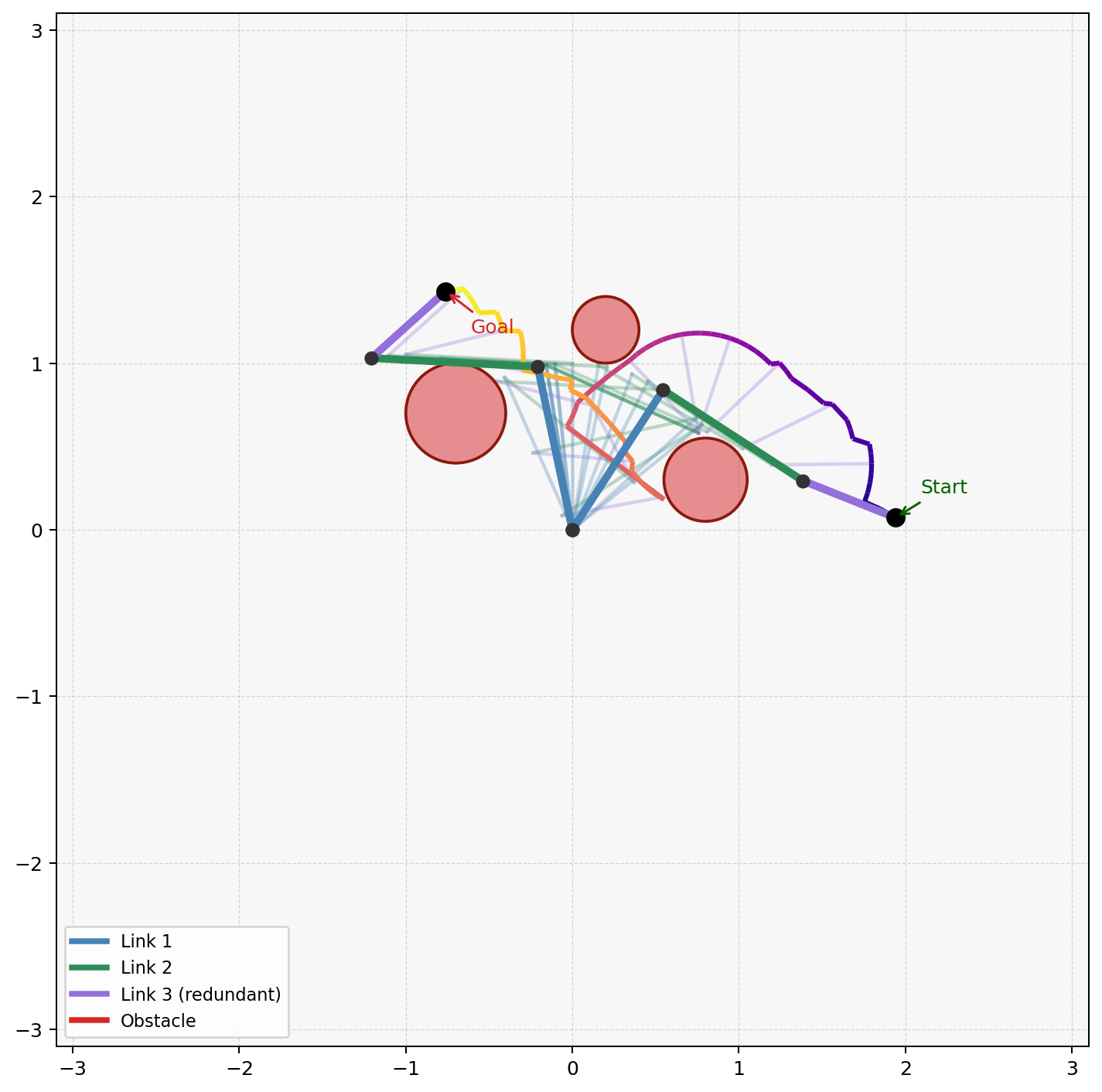}
        \caption{A-star planning for 3R manipulator}
        \label{fig:3R_example}
    \end{subfigure}
    ~
    \begin{subfigure}{0.48\textwidth}
        \centering
        \includegraphics[width=1\linewidth]{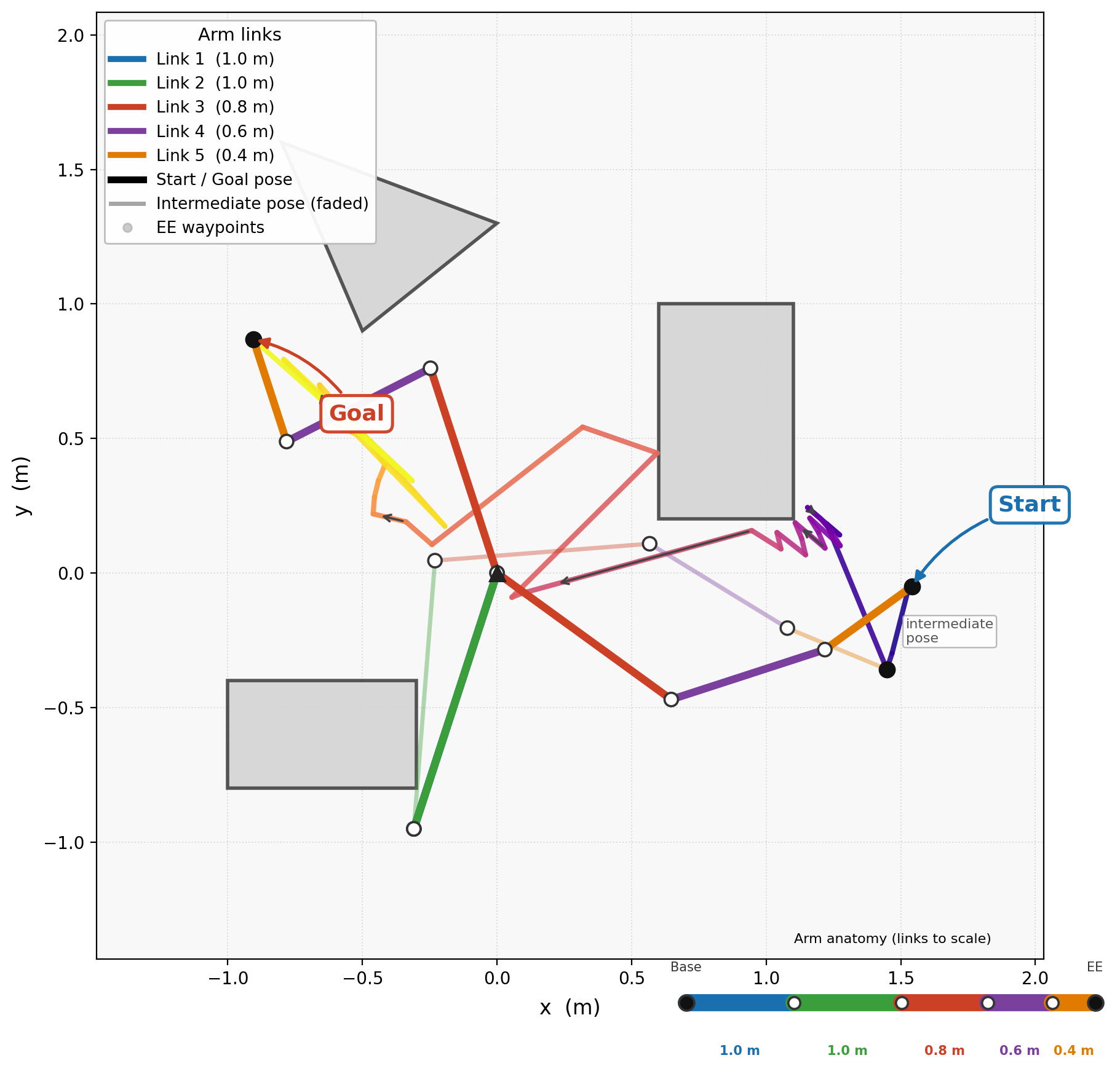}
        \caption{A-star planning for 5R manipulator}
        \label{fig:5R_example}
    \end{subfigure}
    \caption{Path planning using the reduced space}
    \label{fig:example}
\end{figure}

Figure~\ref{fig:workspace_detj} visualizes the reachable workspace $
\mathcal{R}_{2R} $
for the 3R experiment, colored by the Jacobian determinant field. The determinant field separates the workspace into two continuous regions corresponding to elbow-up and elbow-down inverse-kinematic branches. These regions meet only along the singular set
\[
\Sigma
=
\{q\in T^2 : \det(J)=0\}.
\]

Obstacle-induced occlusion partially disconnects the singular transition region, producing topological separation between reachable workspace branches without explicit configuration-space reasoning.

\paragraph{Reduced-Space Planning.}

The proposed planner successfully computes collision-free trajectories entirely within the reduced manifold representation. The lifted trajectories remain continuous in the original configuration space while avoiding discontinuous branch transitions.

In the 3R experiment, planning is performed on
\[
\mathcal{R}_{2R}\times S^1
\]
while in the 5R experiment planning occurs on the recursively-constructed manifold
\[
\mathcal{R}_{4R}\times S^1.
\]

The resulting trajectories demonstrate that obstacle-constrained reachability information can be propagated incrementally through multiple redundant links without constructing the full configuration-space obstacle geometry.

\paragraph{Incremental Reachability Propagation.}

Figure~\ref{fig:incremental_workspaces} shows the recursively-generated reachable workspaces
\[
\mathcal{R}_{2R},
\quad
\mathcal{R}_{3R},
\quad
\mathcal{R}_{4R},
\quad
\mathcal{R}_{5R}.
\]

The reachable regions expand geometrically as additional redundant links are introduced, while obstacle-induced exclusions propagate consistently through successive lifting stages.

These results suggest that the proposed decomposition behaves similarly to a recursive workspace extrusion process, where obstacle-constrained reachability information is transported through successive redundant fibers.

\paragraph{Computational Structure.}

The dominant computational cost arises during reduced manifold construction, where each lifted state requires collision evaluation only for the newly-added redundant link. Since the collision validity of preceding subchains is inherited from earlier stages, the approach avoids repeated full-chain collision checking throughout planning.

As a result, trajectory search itself operates entirely on the reduced manifold graph and is decoupled from explicit configuration-space obstacle geometry after preprocessing.

\begin{figure}[t]
    \centering
    \includegraphics[width=1\linewidth]{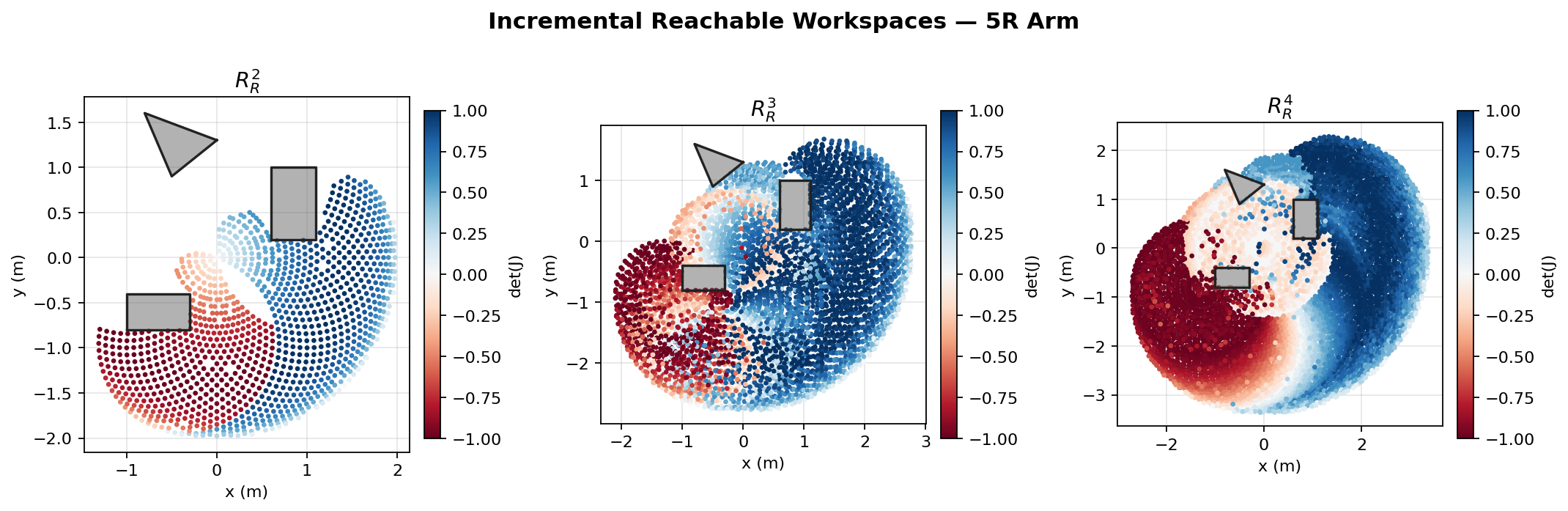}
    \caption{Incremental Workspaces for the 5R planar manipulator.}
    \label{fig:incremental_workspaces}
\end{figure}

\section{Conclusion}
\label{sec:conclusion}
This work introduced a workspace‑fibered decomposition framework for complete motion planning in redundant planar manipulators, building the reachable planning domain incrementally, one redundant degree of freedom at a time. Starting from the minimal non‑redundant 2R positioning subchain, we construct an obstacle constrained reachable workspace that faithfully inherits the topological structure of the connected free component in configuration space. Successive lifting of orientation fibers over this base, combined with projection through forward kinematics, yielded a hierarchy of reduced planning spaces $\mathcal{R}_{mR}\times S^1$. that propagate reachability information without explicitly constructing the full configuration space.

A key ingredient of the framework is the use of the 2R Jacobian determinant as a continuous, task‑aligned branch label that encodes the elbow‑up and elbow‑down inverse‑kinematic branches. Enforcing a local determinant continuity constraint in the reduced lattice suppresses discontinuous branch switches, while still permitting smooth traversal of collision‑free singular configurations. This yields a branch‑consistent planner that operates entirely on the reduced manifold, yet reconstructs continuous, collision‑free trajectories in the original configuration space via incremental lifting.

The experimental evaluation on 3R and 5R planar manipulators demonstrated that the proposed decomposition can propagate obstacle‑constrained reachability through multiple redundant links using only local collision checks on newly introduced links at each stage. In both cases, planning on the final reduced manifold produced lifted trajectories that respected the topology of the underlying self‑motion structure and avoided explicit configuration‑space obstacle reasoning during search. These results suggest that workspace fibration provides a viable alternative to full‑space planning for a class of redundant arms, with a favorable computational structure that decouples preprocessing from trajectory search.

Beyond the specific planar case studies considered here, the framework offers a geometric template for combining task‑induced foliations with fiber‑type factorizations of configuration space, complementing recent work on quasi orthogonal foliations and position‑level redundancy resolution. 

Future work will extend the analysis to spatial manipulators and higher‑ dimensional tasks, investigate stronger theoretical guarantees on the underlying transverse distributions and holonomy, and develop quantitative comparisons against state‑of ‑the‑art configuration space and manifold‑based planners on more complex benchmark environments.

\bibliographystyle{styles/bibtex/splncs03_unsrt}
\bibliography{ref}

\end{document}